\documentclass{article}

\usepackage[nohyperref,accepted]{icml2021}
\usepackage[ref]{leaf}

\title{When are Iterative Gaussian Processes Reliably Accurate?}

\begin{document}

\twocolumn[
\icmltitle{When are Iterative Gaussian Processes Reliably Accurate?}

\begin{icmlauthorlist}
\icmlauthor{Wesley J. Maddox}{nyu}
\icmlauthor{Sanyam Kapoor}{nyu}
\icmlauthor{Andrew Gordon Wilson}{nyu}
\end{icmlauthorlist}

\icmlaffiliation{nyu}{New York University}

\icmlcorrespondingauthor{Wesley J. Maddox}{wjm363@nyu.edu}

\icmlkeywords{Gaussian processes, conjugate gradients}

\vskip 0.3in
]

\printAffiliationsAndNotice{} 

\begin{abstract}
While recent work on conjugate gradient methods and Lanczos decompositions have
achieved scalable Gaussian process inference with highly accurate point predictions, in several implementations these iterative methods appear to struggle with numerical instabilities in learning kernel hyperparameters, and poor test likelihoods.
By investigating CG tolerance, preconditioner rank, and Lanczos decomposition rank, we provide a particularly simple prescription to correct these issues: we recommend that one should use a small CG tolerance ($\epsilon \leq 0.01$) and a large root decomposition size ($r \geq 5000$). Moreover, we show that L-BFGS-B is a compelling optimizer for Iterative GPs, achieving convergence with fewer gradient updates.
\end{abstract}

\section{Introduction}

There are now many methods for scaling Gaussian processes (GPs), including finite basis expansions \citep{rahimi2007random,solin2020hilbert}, inducing point methods \citep{snelson2006sparse,titsias2009variational,hensman2013gaussian}, and iterative numerical methods \citep{gibbs1996cient,cutajar2016preconditioning,gardner_gpytorch:_2018}. These methods all work in various ways to scale linear solves and log determinant computations involving covariance matrices.
In some type of limit, these methods all generally converge to the ``exact" Gaussian process regressor, but at a computational expense.

Iterative methods, such as the preconditioned conjugate gradient (CG) and Lanczos algorithms that form the backbone of the popular software library GPyTorch \citep{gardner_gpytorch:_2018}, are rapidly growing in popularity, in part due to their accuracy and effective use of GPU parallelization. These methods are different from other scalable GP approximations in that the accuracy of their solves can be precisely controlled by hyperparameters such as tolerances. Thus these methods are considered ``exact'' in the same sense as mathematical optimization --- accurate to arbitrary precision.
In practice, these methods can be more precise than the ``exact'' Cholesky-based alternatives, due to round-off errors in floating point numbers \citep{gardner_gpytorch:_2018}.

However, these iterative methods have demonstrated mysterious empirical behaviour. For example, in \citet{jankowiak2020parametric} the ``exact'' iterative GPs generally have impressive test RMSE, but surprisingly poor test likelihood. Similar results can be found for the CG-based SKI method in \citet{sun2021scalable}. 
Moreover, we find there are sometimes numerical instabilities in training kernel hyperparameters with iterative methods, for instance leading to increasingly large kernel length-scales and increasingly poor test performance, as marginal likelihood optimization proceeds. Recent work has explicitly noted issues with iterative CG-based methods \citep{artemev_tighter_2021, potapczynski_bias-free_2021}. For example, \citet{potapczynski_bias-free_2021} show that CG methods can provide biased estimates, and propose random truncations as a remedy. Similarly, \citet{wenger2021reducing} propose variance reduction techniques to reduce the bias of CG and Lanczos methods.

In this work, we study the empirical convergence of iterative Gaussian process models based on conjugate gradients and Lanczos decompositions.
Our empirical study evaluates the hyperparameters of these methods, namely the size of the test time Lanczos decomposition $r,$ the tolerance of conjugate gradients $\tau$, and the rank of the pre-conditioner used for conjugate gradients solutions, $w.$ The primary source of the mysterious empirical issues for iterative methods are overly relaxed defaults in GPyTorch. 

We provide simple prescriptions for adjusting these defaults in order for practitioners to achieve reliably good performance with iterative methods. 
We also highlight the practical benefits of L-BFGS-B for marginal likelihood optimization, which is not presently a standard method for learning kernel hyperparameters.

In Figure \ref{fig:explainer}, we give a graphical explanation of our results on one output dimension of the sarcos dataset $n = 44484, d=21$. In the first two panels from left, we consider the optimization trajectories (using Adam) of both high tolerance iterative GPs (with high values of CG tolerance and a small preconditioner) against low tolerance iterative GPs (with our recommendations of a lower CG tolerance value and a larger preconditioner size), finding that the optimization paths tend to be more stable for low tolerance values.
Then, in the center left panel, we compare test set NLL as a function of the root decomposition size, finding that significant improvements in NLL can be made by simply increasing the size of the root decomposition. 
Finally, in the right panel, we consider test set MSE as a function of CG tolerance, finding that the CG tolerance also plays an important role in the accuracy of the method.

Code for all experiments can be found  \href{https://github.com/wjmaddox/benchmarking_iterative_gps}{\underline{here}}.

\begin{figure*}[t!]
    \centering
    \includegraphics[width=\textwidth]{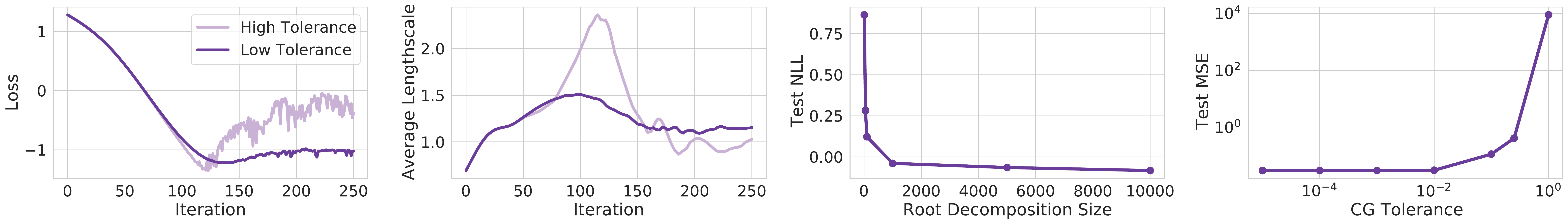}
    \caption{\textbf{Far Left:} Loss as a function of iteration for both high tolerance (GPyTorch defaults) and low tolerance (CG tolerance of $10^{-4}$ and a larger preconditioner size); low tolerance produces more stable optimization trajectories. \textbf{Center Left:} Outputscale as a function of iteration for both high tolerance and low tolerance CG; again, low tolerance provides more stable trajectories. \textbf{Center right:} Test set negative log likelihood as a function of root decomposition size; a larger root decomposition tends to produce lower NLL (which is better) but this can begin increasing at larger sizes. \textbf{Far right:} Test set MSE as a function of test set CG tolerance; a lower tolerance is more accurate.}
    \label{fig:explainer}
\end{figure*}

\section{Gaussian Process Regression}
Gaussian processes (GPs) are a non-parametric method that model priors directly over functions, and provide well-calibrated uncertainties \citep{rasmussen_gaussian_2008}. In this work, for simplicity, we focus on the regression setting with isotropic noise. For a dataset $\dset$ of size $n$, the relation between $d$-dimensional inputs $\mbf{X} \in \reals^{n\times d}$, and corresponding outputs $\mbf{y} \in \reals^{n}$, is modeled using a GP prior ${f(\mbf{X}) \sim \gp{\mu(\mbf{X}), k(\mbf{X},\mbf{X})}}$ and a Gaussian observation likelihood ${\mbf{y}} \sim \gaussian{f(\mbf{X}), \sigma^2\mbf{I}}$. The prior is fully specified by a mean function $\mu(\cdot)$, and a kernel function $k(\cdot,\cdot)$, both with parameters collectively denoted as $\theta$. We take the mean function to be zero, as typical in literature. The kernel function induces a covariance matrix between two sets $A \in \reals^{m\times d}$ and $B \in \reals^{m^\prime\times d}$, denoted as $K_{A,B} \in \reals^{m\times m^\prime}$. Therefore, the prior covariance matrix is denoted by $K_{\mbf{X},\mbf{X}} \in \reals^{n\times n}$. 

Gaussian process inference aims to find the posterior over the functions $f$ for $n_\star$ novel inputs $\mbf{X}_\star$, which is also a Gaussian and available in closed-form as,
\begin{align}
p(f(\mbf{X}_\star) \mid \mbf{X}_\star,& \dset, \theta) = \mathcal{N}(\mu(\mbf{X}_\star), \Sigma(\mbf{X}_\star)), \label{eq:gp_post} \\
\mu(\mbf{X}_\star) &= K_{\mbf{X}_\star, \mbf{X}} \widehat{K}_{\mbf{X},\mbf{X}}^{-1} \mbf{y}~, \nonumber \\
\Sigma(\mbf{X}_\star) &= K_{\mbf{X}_\star, \mbf{X}_\star} - K_{\mbf{X}_\star, \mbf{X}} \widehat{K}_{\mbf{X},\mbf{X}}^{-1} K_{\mbf{X}_\star, \mbf{X}}^\top ~, \nonumber
\end{align}
where $\widehat{K}_{\mbf{X},\mbf{X}} = K_{\mbf{X},\mbf{X}} + \sigma^2 \mbf{I}$.
The parameters $\theta$ are chosen by maximizing the marginal $\log$-likelihood (MLL) $L_\theta$,
\begin{align}
\underbrace{\log p(\mbf{y} \mid \mbf{X}, \theta)}_{L_\theta} \propto ~& -\frac{1}{2}\mbf{y}^\top\widehat{K}_{\mbf{X},\mbf{X}}^{-1}\mbf{y} - \frac{1}{2} \logdet{\widehat{K}_{XX}}~, \label{eq:gp_mll}
\end{align}
and its gradients are given by,
\begin{small}
\begin{align}
\hspace*{-1em}\pd{L_\theta}{\theta} = \mbf{y}^\top \widehat{K}_{\mbf{X},\mbf{X}}^{-1} \pd{\widehat{K}_{\mbf{X},\mbf{X}}}{\theta}\widehat{K}_{\mbf{X},\mbf{X}}^{-1} \mbf{y} + \tr{\widehat{K}_{\mbf{X},\mbf{X}}^{-1} \pd{\widehat{K}_{\mbf{X},\mbf{X}}}{\theta}}. \label{eq:gp_mll_grad}
\end{align}
\end{small}

Traditional methods to compute \cref{eq:gp_post,eq:gp_mll,eq:gp_mll_grad} use Cholesky factorization to solve the linear systems involving $\widehat{K}_{\mbf{X},\mbf{X}}^{-1}$, and the determinant computations involving $\widehat{K}_{\mbf{X},\mbf{X}}$ \citep{rasmussen_gaussian_2008}. This incurs an expensive cost of \bigo{n^3}, making exact GPs feasible only for less than $n=10,000$ data points and severely limiting scalability.

\subsection{Iterative Gaussian Processes}
\label{sec:igp}

\citet{gardner_gpytorch:_2018} propose conjugate gradients (CG) to solve $\widehat{K}_{\mbf{X},\mbf{X}}^{-1} \mbf{y}$, and stochastic trace estimator \citep{Hutchinson1989ASE} to compute the derivative of $log$-determinant $\tr{\widehat{K}_{\mbf{X},\mbf{X}}^{-1} \pd{\widehat{K}_{\mbf{X},\mbf{X}}}{\theta}}$ in \cref{eq:gp_mll_grad} as, 
\begin{small}
\begin{align*}
\mathbb{E}_{p(z)}\left[ z^\top \widehat{K}_{\mbf{X},\mbf{X}}^{-1} \pd{\widehat{K}_{\mbf{X},\mbf{X}}}{\theta} z \right]
\approx \frac{1}{B} \sum_{b=1}^B (z_b^\top \widehat K_{\mbf{X},\mbf{X}}^{-1} \pd{\widehat{K}_{\mbf{X},\mbf{X}}}{\theta} z_b)
\end{align*}
\end{small}
where we use $B$ probe vectors $\{z_b\}_{b=1}^B$. Notably, we can again use conjugate gradients to solve $\widehat{K}_{\mbf{X},\mbf{X}}^{-1} z_b$. 

Consequently, for kernel matrices $K_{\mbf{X},\mbf{X}}$, using $r \ll n$ iterations of conjugate gradients produces exact inference up to machine precision in \bigo{rn^2} time, achieving significant computational gains. The MLL can be computed using $B+1$ CG solves and is easily parallelized. Exact GPs are recovered when $r = n$. A flurry of recent work \citep{wang_exact_2019,potapczynski_bias-free_2021,artemev_tighter_2021,kapoor2021skiing} has shown the effectiveness of these methods for highly scalable Gaussian processes.

Further, \citet{pleiss_constant-time_2018} propose to store the single solve of $\textcolor{blue}{\mbf{m}} = \widehat{K}_{\mbf{X},\mbf{X}}^{-1}\mbf{y}$ at test time as the predictive mean cache, while similarly computing a rank $k$ Lanczos decomposition of $\widehat{K}_{\mbf{X},\mbf{X}}^{-1} \approx \textcolor{blue}{RR^\top}$ (the predictive covariance cache) and storing that for all new test points. The predictive mean and variance for a test point $\mbf{x}_\star$ are then given by,
\begin{align}
\begin{split}
\mu(\mbf{x}_\star) &= K_{\mbf{x}_\star,\mbf{X}}\textcolor{blue}{\mbf{m}}~, \\
\Sigma(\mbf{x}_\star) &= K_{\mbf{x}_\star, \mbf{x}_\star} - K_{\mbf{x}_\star,\mbf{X}}\textcolor{blue}{RR^\top} K_{\mbf{x}_\star,\mbf{X}}^\top~.    
\end{split}
\end{align}
Computing the Lanczos decomposition costs $\mathcal{O}(kn^2),$ so that computing the predictive mean and covariance after this pre-computation cost only $\mathcal{O}(knn_\star)$. In general, Lanczos will tend to find the largest and smallest eigenvalues of $\widehat{K}_{\mbf{X},\mbf{X}}^{-1}$, converging from the \emph{outside in} so to speak \citep[Ch. 7]{demmel1997applied}. Collectively, we call inference involving all the above numerical methods as \emph{Iterative Gaussian Processes} \citep{potapczynski_bias-free_2021}.

\section{Understanding Iterative GP Approximations}
\label{sec:understanding_igp}

Understandably, the \emph{speed} and \emph{quality} of Iterative GPs are crucially reliant on conjugate gradients. (i) First, the number of CG iterations $r$ depend on the condition number of $\widehat{K}_{\mbf{X},\mbf{X}}$, which grow somewhat with $n$. To alleviate this concern, \citet{gardner_gpytorch:_2018} use a pivoted Cholesky pre-conditioner of rank $w$, noting that low ranks are sufficient. Alternatively, \citet{artemev_tighter_2021} propose an inducing point-based pre-conditioner. (ii) Second, in practice, we often use a pre-determined error tolerance $\epsilon$ to truncate CG iterations. A higher tolerance implies that the solve may use fewer iterations than $r$, while a lower tolerance may require more iterations than $r$.  A high tolerance, however, has been noted to be detrimental to MLL \cref{eq:gp_mll} maximization \citep{artemev_tighter_2021}, and may lead to noisy training curves affecting final performance \citep{kapoor2021skiing}. (iii) Finally, CG truncations at a value $r \ll n$ introduces approximation bias, which is also detrimental to the maximization of MLL \citep{potapczynski_bias-free_2021}.

Owing to the considerations above, and their emphasis in prior work \citep{gardner_gpytorch:_2018,pleiss_constant-time_2018,pleiss_fast_2020}, we focus our study on analyzing \emph{three} key hyperparameters \textemdash~ (i) the CG tolerance $\epsilon$, (ii) rank of the pivoted Cholesky pre-conditioner $w$, and (iii) the rank of the Lanczos decomposition $k$ at test time. In addition, we analyze the interaction of these hyperparameters to both a first-order optimizer Adam \citep{kingma2014adam}, and a second-order optimizer L-BFGS-B \citep{zhu1997algorithm}.

Historically, (nearly) second order optimizers such as L-BFGS-B have been preferred for optimizing Gaussian process hyper-parameters \citep{rasmussen_gaussian_2008}; however, they tend to perform poorly in the presence of noisy estimates of gradients \citep{gardner_gpytorch:_2018,balandat_botorch_2020}.
Thus, \citet{gardner_gpytorch:_2018,wang_exact_2019} tend to fit their GP models with first order optimizers such as Adam, as L-BFGS-B on all but small datasets tends to be infeasible.
One of our goals in our study is to determine the hyper-parameters under which we can use L-BFGS-B to train hyper-parameters of Gaussian process models while still using iterative methods.

\subsection{Lanczos Decomposition Rank and Posterior Likelihood}

\begin{restatable}[]{proposition}{lanczosposterior}
\label{prop:lanczos_posterior}
For a single test data point $\mbf{x}_\star$, increasing the rank $k$ of an approximate eigen-decomposition will decrease the posterior variance $\sigma^2$.
\end{restatable}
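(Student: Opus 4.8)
The plan is to express the approximate posterior variance $\Sigma(\mbf{x}_\star) = K_{\mbf{x}_\star,\mbf{x}_\star} - K_{\mbf{x}_\star,\mbf{X}} R_k R_k^\top K_{\mbf{x}_\star,\mbf{X}}^\top$ in terms of the rank-$k$ Lanczos/eigen-approximation of $\widehat{K}_{\mbf{X},\mbf{X}}^{-1}$, and to show that the subtracted quadratic form $K_{\mbf{x}_\star,\mbf{X}} R_k R_k^\top K_{\mbf{x}_\star,\mbf{X}}^\top$ is monotonically non-decreasing in $k$. Writing $R_k R_k^\top = \sum_{i=1}^k \lambda_i^{-1} u_i u_i^\top$ where $(\lambda_i, u_i)$ are the (approximate) Ritz pairs ordered so that the largest-magnitude terms of the inverse are captured first, the key observation is that passing from $k$ to $k+1$ adds a single rank-one positive semidefinite term $\lambda_{k+1}^{-1} u_{k+1} u_{k+1}^\top$ to this matrix. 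Hence the subtracted scalar $K_{\mbf{x}_\star,\mbf{X}} R_{k+1} R_{k+1}^\top K_{\mbf{x}_\star,\mbf{X}}^\top - K_{\mbf{x}_\star,\mbf{X}} R_k R_k^\top K_{\mbf{x}_\star,\mbf{X}}^\top = \lambda_{k+1}^{-1} (u_{k+1}^\top K_{\mbf{x}_\star,\mbf{X}}^\top)^2 \geq 0$, so $\Sigma(\mbf{x}_\star)$ is non-increasing in $k$.

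First I would set up notation: treat the Lanczos output as furnishing an orthonormal set $\{u_i\}$ and positive Ritz values, so that $R_k$ has columns $\lambda_i^{-1/2} u_i$ (this matches the $\widehat K^{-1}\approx RR^\top$ convention in Section \ref{sec:igp}, and positivity of the $\lambda_i$ follows since $\widehat K_{\mbf{X},\mbf{X}} = K_{\mbf{X},\mbf{X}} + \sigma^2\mbf I \succ 0$). Second, I would state the nesting property of Lanczos iterates: the Krylov subspace $\mathcal{K}_k$ is contained in $\mathcal{K}_{k+1}$, so the rank-$k$ approximation is obtained from the rank-$(k+1)$ one by dropping the final Ritz contribution — this is what licenses writing $R_{k+1}R_{k+1}^\top = R_kR_k^\top + \lambda_{k+1}^{-1}u_{k+1}u_{k+1}^\top$. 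Third, I would compute the difference of the two posterior variances and read off that it equals a nonnegative scalar, yielding the claim; equality holds precisely when $K_{\mbf{x}_\star,\mbf{X}}$ is orthogonal to the newly added Ritz direction.

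The main obstacle is that the nesting identity $R_{k+1}R_{k+1}^\top = R_kR_k^\top + \lambda_{k+1}^{-1}u_{k+1}u_{k+1}^\top$ is exactly true for a genuine truncated eigendecomposition but only approximately true for the Lanczos procedure, since in finite-precision arithmetic the Ritz vectors need not be exactly orthogonal and the Ritz values at step $k$ differ slightly from those at step $k+1$. I would handle this by either (a) stating the result for the idealized rank-$k$ eigen-approximation (which is how the proposition is phrased — "an approximate eigen-decomposition"), so that the nesting is exact by construction, or (b) noting that in exact arithmetic Lanczos produces exactly this nested family, and treating the finite-precision discrepancy as outside the scope of the statement. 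Given the wording of Proposition~\ref{prop:lanczos_posterior}, option (a) is the cleanest and I would present the argument at that level of generality.
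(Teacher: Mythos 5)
Your proposal is correct and follows essentially the same route as the paper: both identify the change from rank $k$ to rank $k+1$ as the addition of a single rank-one positive semidefinite term to the subtracted quadratic form, so the difference of posterior variances is $\widehat{\lambda}_{k+1}^{-1}$ times a squared inner product and hence nonnegative. Your explicit caveat that the nesting identity holds exactly only for a truncated eigendecomposition (not for finite-precision Lanczos Ritz pairs) is a point the paper's proof silently assumes, and your choice to argue at the level of the idealized eigen-approximation matches what the paper actually does.
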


The proof of \cref{prop:lanczos_posterior} is in \cref{sec:tech_app}, and proceeds algebraically.
\cref{prop:lanczos_posterior} implies that increasing the rank of a Lanczos decomposition at test reduces the posterior variance of the GP, making the GP more confident about its predictions.
As we show next, this effect can have counterintuitive effects on the test time negative log likelihood (NLL).
As a function of the variance $\sigma^2,$ the NLL is,
\begin{align}
\mathrm{NLL}(\sigma^2) := \frac{1}{2}\log \sigma^2 + \frac{1}{2\sigma^2}(\mu - y)^2~, \label{eq:gp_nll}
\end{align}
where $\mu$ is the predictive mean.
Differentiating and setting the derivative equal to zero finds that $\sigma^2 = (\mu - y)^2$ is a minimum (the global minimum) agrees with standard results where the maximum likelihood estimator of $\sigma^2$ in regression is the sum of squared errors of the predictor.

\begin{figure}[!ht]
\centering
\includegraphics[width=.55\linewidth]{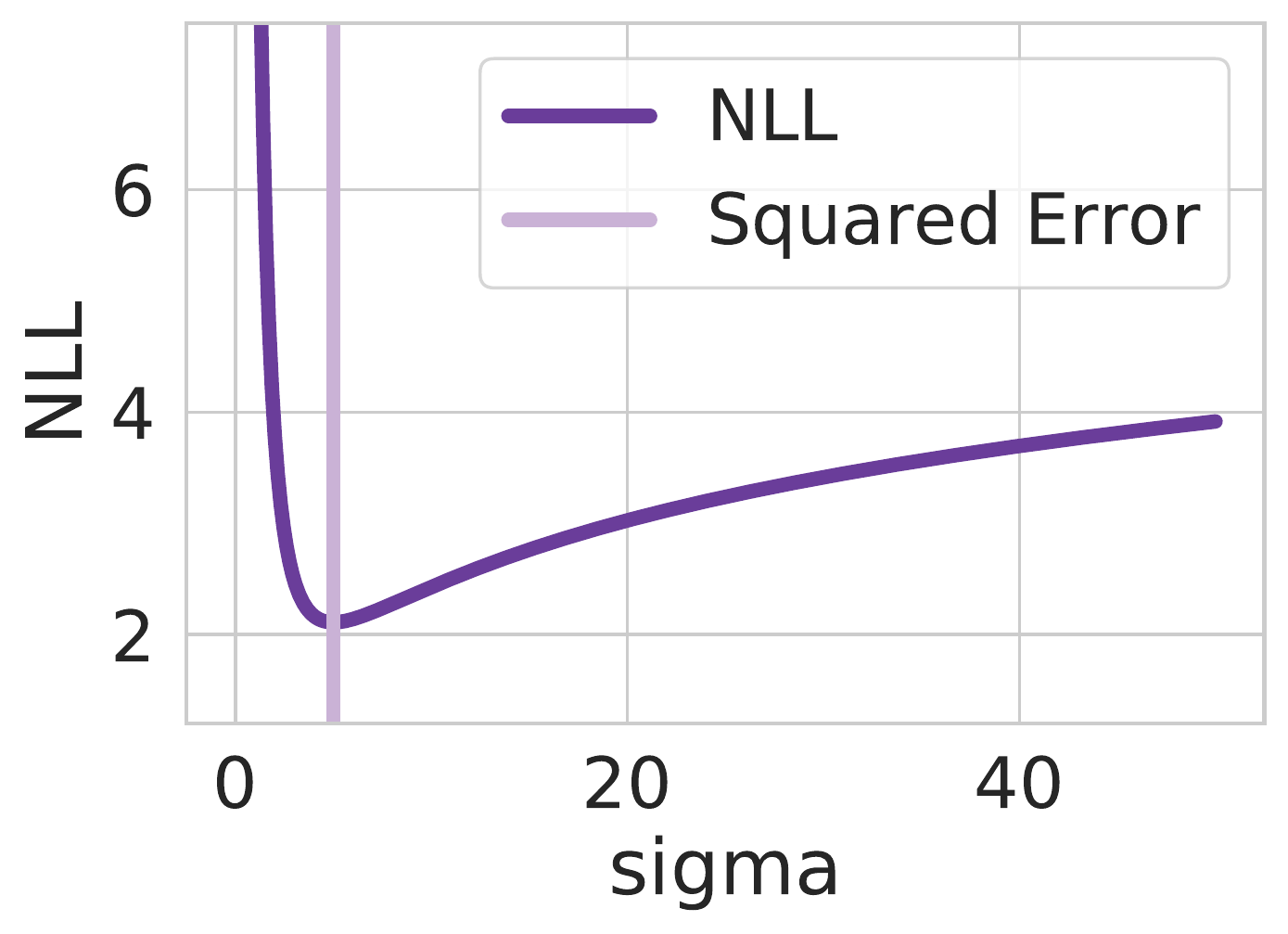}
\vspace{-1em}
\caption{NLL \cref{eq:gp_nll} as a function of the posterior standard deviation $\sigma$. The NLL is minimized when $\sigma^2 = (y - \mu)^2$.}
\label{fig:nll_schematic}
\end{figure}

We show in \cref{fig:nll_schematic} that the NLL is minimized when $\sigma^2$ is set as the squared error of the prediction.
\cref{fig:nll_schematic} also demonstrates that if we control the posterior variance (the square of $\sigma$), then the NLL is \emph{not monotonic} as we decrease (or increase) the posterior variance. 
Thus, as we increase the rank of the Lanczos decomposition, we decrease the variance --- beginning by decreasing the NLL towards the NLL's minimizer (which may or may not correspond to the full rank Cholesky solution).
However, once we have reached the NLL's minimizer, continuing to decrease the posterior variance then increases the NLL again.

\begin{table*}[t]
\renewcommand{\arraystretch}{1.2}
\caption{
Recommended settings for stable training (and testing) with iterative GPs, especially when using L-BFGS-B.
}
\label{table:hyper_recommendation}
\begin{sc}
\begin{adjustbox}{width=\linewidth}
\begin{tabular}{c|c|c|c|c|c}
\toprule
\multicolumn{2}{c|}{Training (Kernel Learning)} & \multicolumn{2}{c|}{Predictive Mean} & \multicolumn{2}{c}{Predictive Distributions} \\
\midrule
Hyperparameter & Value(s) & Hyperparameter & Value(s) & Hyperparameter & Value(s) \\
\midrule
\textbf{CG Tolerance} $\epsilon$ & $10^{-3}$ &\textbf{CG Tolerance} $\epsilon$ & $10^{-2}$ & \textbf{CG Tolerance} $\epsilon$ & $10^{-2}$ \\
\textbf{CG Preconditioner Rank} $w$ & $50$ & \textbf{CG Preconditioner Rank} $w$ & $50$ & \textbf{CG Preconditioner Rank} $w$ & $50$ \\
\textbf{Lanczos Decomposition Rank} $k$ & - & \textbf{Lanczos Decomposition Rank} $k$ & - & \textbf{Lanczos Decomposition Rank} $k$ & $\geq5000$ \\
\bottomrule
\end{tabular}
\end{adjustbox}
\end{sc}
\end{table*}

These two settings imply two different behaviors \textemdash~ if the NLL decreases as we increase the root decomposition, then the hyper-parameters of the GP model are under-fit as the squared residual error on the test set is less than the estimated posterior variance.
If the opposite occurs (NLL increases as we increase the root decomposition), then the GP model itself is over-fit, the squared residual error on the test set is greater than the estimated posterior variance, e.g. we are over-confident about our predictions.

\section{Results}

Through our experiments, we establish that (i) increasing the Lanczos decomposition rank $r$ fixes the discrepancy between exact GPs and Iterative GPs; (ii) decreasing the CG tolerance allows us to run L-BFGS-B\footnote{We use the full-batch implementation available at \url{https://github.com/hjmshi/PyTorch-LBFGS}.} \citep{zhu1997algorithm}, which has not been investigated in prior literature.

\textbf{Experimental Setup}: We consider four benchmark datasets from the UCI repository \citep{Dua2019}, elevators, protein, bike and poletele, standardized to zero sample mean and unit sample variance using the training set. We run the Cholesky and Iterative GPs over five random splits of the data, reporting the mean. As in \citet{gardner_gpytorch:_2018}, we use the Matérn-$5/2$ kernel with automatic relevance determination (ARD) and constant means. For optimization, we use a learning rate of $0.05$ for $2000$ steps or until convergence. As established in \cref{sec:understanding_igp}, the key hyperparameters of importance are the CG error tolerance $\epsilon$, rank of pivoted Cholesky pre-conditioner $w$, and the rank of the Lanczos decomposition $k$ at test time. In addition, we want to understand the interaction of these parameters with optimizers Adam and L-BFGS-B. As in \citet{wang_exact_2019}, which used L-BFGS-B only for pre-training initialization, we use the default setting of $10$ memory vectors.

\textbf{Effect of Tolerance and Pre-conditioner Size}:
We find that the the discrepancy between the NLL achieved by Cholesky-based inference and Iterative GPs can be attributed primarily to (i) too large of a CG tolerance, and (ii) too small of a test time root decomposition.

Varying the (test-time) root decomposition size on elevators for a large preconditioner, and use a smaller CG tolerance ($0.01$ or less), as in \cref{fig:nll_precond_elevators}, we are able to reach the Cholesky baseline in terms of NLL for a root decomposition size of about $10,000$.
By comparison, the RMSE converges much faster to the Cholesky baseline (shown in grey again) for preconditioner sizes of $50$ and $100$ on the same dataset (\cref{fig:rmse_elevators} left three panels).
In addition, we vary the CG tolerance in both double and float precision for three different pre-conditioner sizes, finding that the RMSE converges very quickly, and for CG tolerances less than $0.1$, the test time RMSE differences are negligible.

While the previous experiments compared a mix of first order and second order methods as in \citet{wang_exact_2019}, in \cref{fig:nll_lbfgs}, we show CG tolerance is also important for the success of L-BFGS-B only.
In \cref{fig:rmse_elevators} far right, we use a tolerance of $1.0$ at train time, varying the test time CG tolerance, finding that the performance of Adam and L-BFGS-B decay for high tolerances, with the Adam performance decaying more.
Finally, in 
\cref{fig:nll_lbfgs}, we vary preconditioner size and CG tolerance for fixed root decomposition, finding again that low test-time CG tolerance is imperative for good results with L-BFGS-B.

\section{Recommendations}

As shown in Table \ref{table:hyper_recommendation}, we recommend a low CG tolerance for training (of around $10^{-3}$), along with a larger preconditioner size (of $50$), for more reliable performance. Between these, a lower CG tolerance should be preferred.
These recommendations apply to both training with the GP marginal log likelihood as well as computing the predictive mean.
For predictive distributions, we also recommend computing a larger Lanczos decomposition rank (of around $5000$).

As of this writing, default GPyTorch settings\footnote{\url{https://docs.gpytorch.ai/en/stable/settings.html}} are a tolerance of $1.0$ at train time, and $10^{-3}$ at test time, and a root decomposition size of $100$.
We hypothesize that test NLL is not the only quantity that can be significantly hindered as a result of using too small a root decomposition, as sampling depends on similar quantities. In the future, we hope to expand our benchmarking to use Iterative GPs for large-scale data in Bayesian optimization \citep{balandat_botorch_2020}.

\clearpage

\begin{figure*}[!ht]
    \centering 
    \includegraphics[width=.7\linewidth]{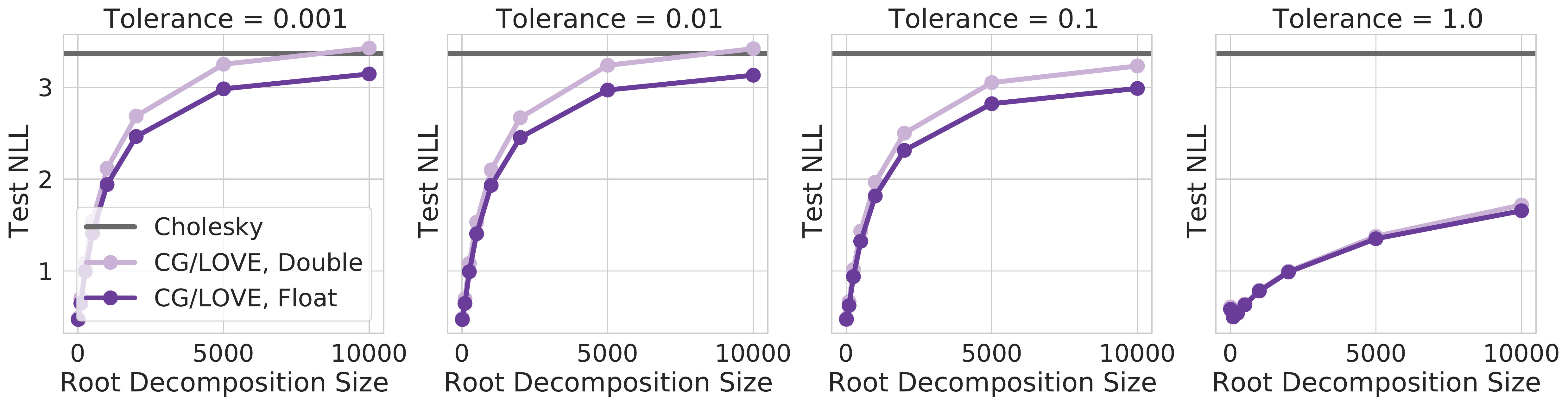}
    \vspace{-1em}
    \caption{Test set NLL as a function of root decomposition size and CG tolerance on elevators. Decreasing the CG tolerance with a preconditioner of size $100$ produces more accurate solves and thus more accurate NLL estimation for large root decomposition sizes.}
    \label{fig:nll_precond_elevators}
\end{figure*}
\begin{figure*}[!ht]
    \centering
\begin{tabular}{cc}
    \includegraphics[width=.6\linewidth]{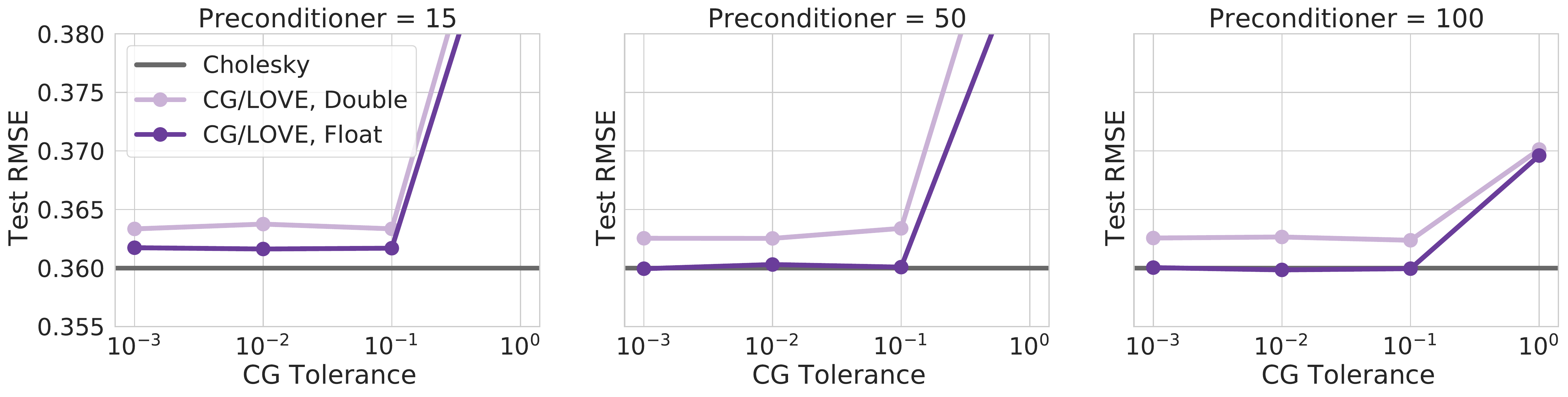} &  \includegraphics[width=.2\linewidth]{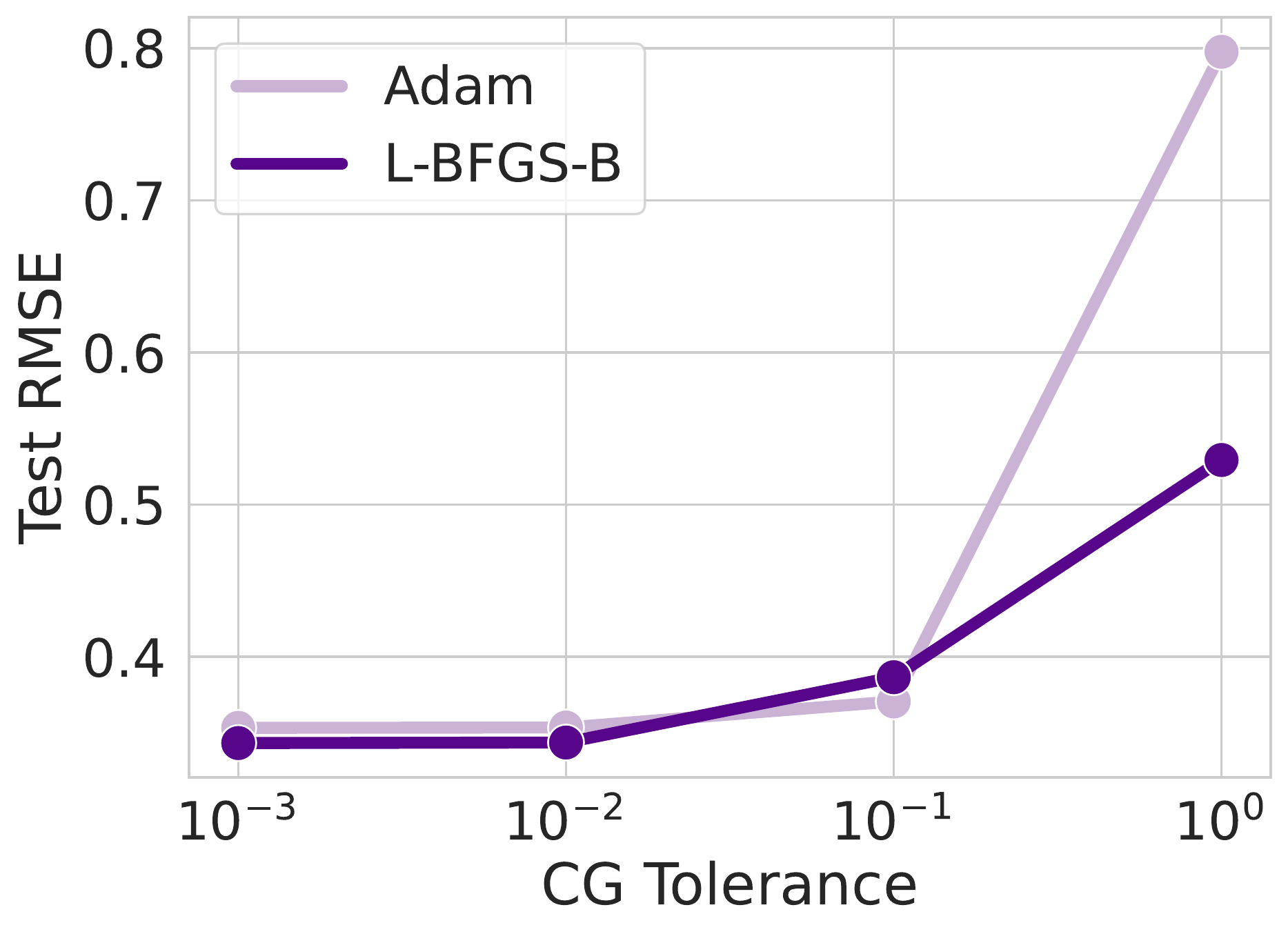}
\end{tabular}
\vspace{-1em}
    \caption{RMSEs as a function of preconditioner size on elevators. A small enough CG tolerance ($0.1$ or less) produces very similar results to the cholesky baseline even for small preconditioner sizes.}
    \label{fig:rmse_elevators}
\end{figure*}

\begin{figure*}[!ht]
    \centering
    \includegraphics[width=.8\linewidth]{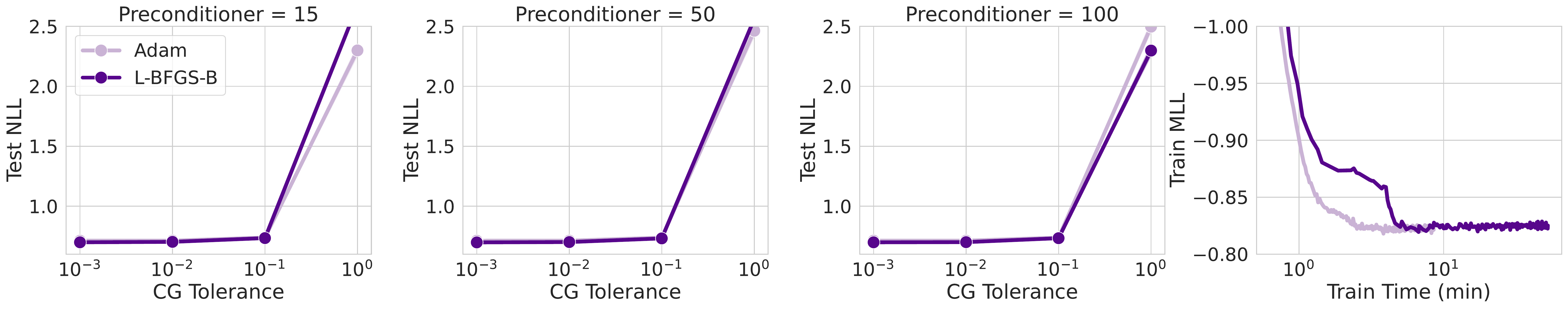}
    \vspace{-1em}
    \caption{\textbf{Left Three Panels: } Test set NLL on protein for both Adam (with L-BFGS-B pre-training) and L-BFGS-B as a function of preconditioner size and CG tolerance. CG tolerance strongly matters for performance, even more so for L-BFGS-B. \textbf{Far Right Panel: } Training time on a single GPU; Adam is approximately twice as fast to reach the same training MLL (and ultimately test set NLL).}
    \label{fig:nll_lbfgs}
\end{figure*}

\clearpage
\section*{Acknowledgements}

We thank Sam Stanton and Alex Wang for helpful discussions.

\bibliographystyle{icml2021}
\bibliography{references}

\clearpage
\appendix

\twocolumn[
\icmltitle{Appendix for \\ When is an Iterative Gaussian Process Numerically Exact?}

\begin{icmlauthorlist}
\icmlauthor{Wesley J. Maddox}{nyu}
\icmlauthor{Sanyam Kapoor}{nyu}
\icmlauthor{Andrew Gordon Wilson}{nyu}
\end{icmlauthorlist}

\vskip 0.3in
]

\section{Technical Appendix}
\label{sec:tech_app}

We prove \cref{prop:lanczos_posterior} here.

\lanczosposterior*
\begin{proof}
For a given test data point $\mbf{x}_\star$, we denote the posterior variance using rank $k$ as $\sigma_k^2(\mbf{x}_\star)$. We also note a full eigendecomposition ${\widehat{K}_{\mbf{X},\mbf{X}} =  Q\widehat{\Lambda}Q^\top}$, where ${\widehat{\Lambda} = \Lambda + \sigma^2\mbf{I}}$ such that the diagonal of matrix $\Lambda$ contains the eigenvalues of the covariance matrix $K_{\mbf{X},\mbf{X}}$. Then, we aim to show,
\begin{align}
\sigma_k^2(\mbf{x}_\star) - \sigma_{k+1}^2(\mbf{x}_\star) \geq 0~.
\end{align}
Denoting by $Q_{:k}$ and $\widehat{\Lambda}_{:k}$ a slice of the first $k$ eigenvectors and eigenvalues respectively, we expand the posterior variances as in \cref{eq:gp_post}. Therefore, we need,
\begin{align}
\begin{split}
\left(K_{\mbf{x}_\star,\mbf{x}_\star} - K_{\mbf{x}_\star,\mbf{X}}Q_{:k}\widehat{\Lambda}_{:k}^{-1}Q_{:k}^\top K_{\mbf{x}_\star,\mbf{X}}^\top\right) -&~ \\ 
\left(K_{\mbf{x}_\star,\mbf{x}_\star} - K_{\mbf{x}_\star, \mbf{X}}Q_{:k+1}\widehat{\Lambda}_{:k+1}^{-1}Q_{:k+1}^\top K_{\mbf{x}_\star,\mbf{X}}^\top \right) &\geq 0~.
\end{split}
\end{align}
By appending zeros to the rank-$k$ diagonal matrix $\widehat{\Lambda}_{:k}^{-1}$, and factorizing, we have,
\begin{align}
K_{\mbf{x}_\star, \mbf{X}}Q_{:k+1}\left( \widehat{\Lambda}_{:k+1}^{-1} - \left[\widehat{\Lambda}_{:k}^{-1}; \mbf{0} \right] \right) Q_{:k+1}^\top K_{\mbf{x}_\star, \mbf{X}}^\top \geq 0 \\
(1 / \widehat{\lambda}_{k+1}) \left(K_{\mbf{x}_\star, \mbf{X}}Q_{:k+1}\right)^\top\left(K_{\mbf{x}_\star, \mbf{X}}Q_{:k+1}\right) \geq 0~, \label{eq:prop1_proof}
\end{align}
where $\widehat{\lambda}_{k} > 0$ denotes the $k^{\mathrm{th}}$ eigenvalue, and \cref{eq:prop1_proof} holds by virtue of being an inner product. Hence, the posterior variance reduces as we increase $r$ at test time.
\end{proof}
\section{Hyperparameters Matrix}

We list all the hyperparameters studied in \cref{table:hypers}.
For model fitting, we use Matern-$5/2$ kernels with constant means and automatic relevance determination\footnote{\url{https://botorch.org/api/models.html\#module-botorch.models.gp\_regression} except that we place a softplus transform on the raw noise.} and use an exponential moving average stopping rule to monitor convergence\footnote{From \url{https://botorch.org/api/optim.html\#botorch.optim.fit.fit\_gpytorch\_torch}.}

\begin{table}[!ht]
\renewcommand{\arraystretch}{1.2}
\caption{
We document all the settings and hyperparameters involved.
}
\label{table:hypers}
\vskip 0.15in
\begin{sc}
\begin{adjustbox}{width=\linewidth}
\begin{tabular}{l|c}
\toprule
Hyperparameter & Value(s) \\
\midrule
Kernel Family & $\mathrm{Mat\acute{e}rn}$-5/2 \\
Max. Epochs & $2000$  \\
Optimizer & \{$\mathrm{Adam}$, L-BFGS-B \} \\
L-BFGS-B Memory Vectors & $10$ \\
Learning Rate & $0.05$ \\
Max. CG Iterations & $500$ \\
\textbf{CG Tolerance} $\epsilon$ & $\{10^{-3}, 10^{-2}, 10^{-1}, 1.\}$ \\
\textbf{CG Preconditioner Rank} $w$ & \{$15$, $50$, $100$\} \\
\textbf{Lanczos Decomposition Rank} $k$ & \parbox{.6\linewidth}{\{$15$, $100$, $500$, $1000$, $2500$, $5000$, $10000$\}} \\
\bottomrule
\end{tabular}
\end{adjustbox}
\end{sc}
\end{table}

\section{Further Experimental Results}
All experimental code can be found at \url{https://github.com/wjmaddox/benchmarking_iterative_gps}.

\begin{figure*}[!ht]
    \centering
    \includegraphics[width=.8\linewidth]{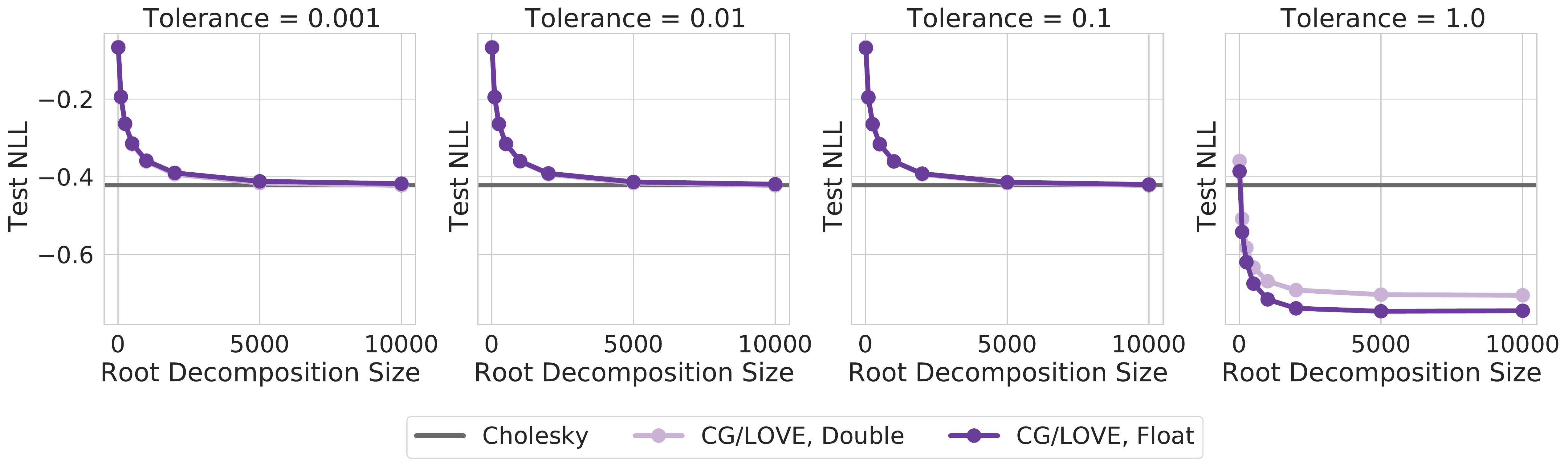}
    \caption{Test NLL as a function of root decomposition size and CG tolerance on \texttt{bike} dataset. Decreasing the CG tolerance with a preconditioner of size $15$ produces more accurate solves and thus more accurate NLL estimation for large root decomposition sizes.}
    \label{fig:nll_precond_bike}
\end{figure*}

\begin{figure*}[!ht]
    \centering
    \includegraphics[width=.8\linewidth]{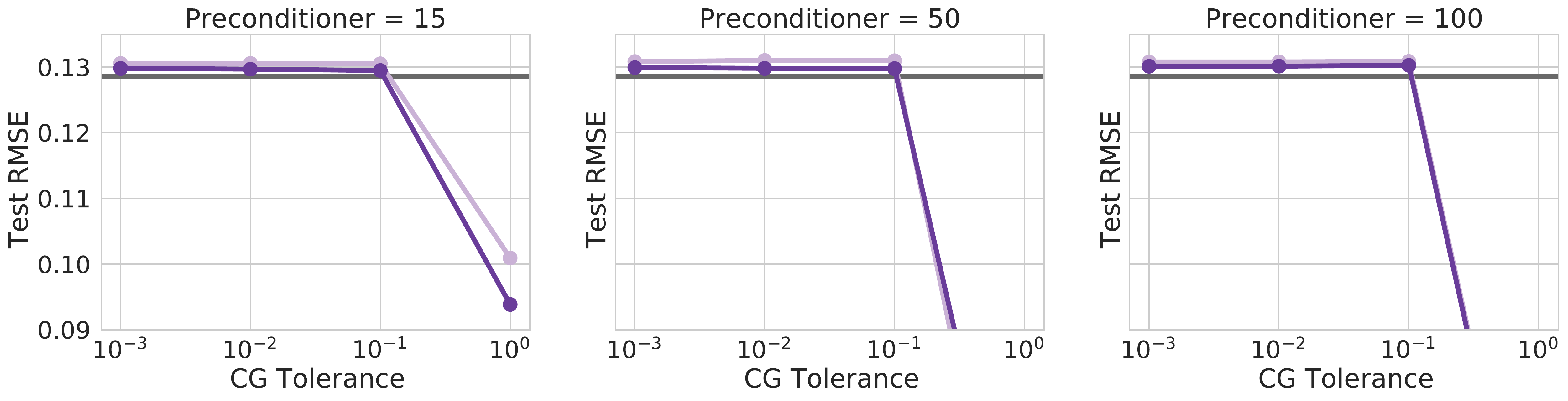}
    \caption{When plotting the test RMSE on \texttt{bike} dataset against the CG error tolerance, we find that the CG preconditioner rank does not significantly affect the performance.}
    \label{fig:rmse_bike}
\end{figure*}

\paragraph{Default settings of GPyTorch}
\begin{figure*}[!ht]
    \centering
    \includegraphics[width=.8\linewidth]{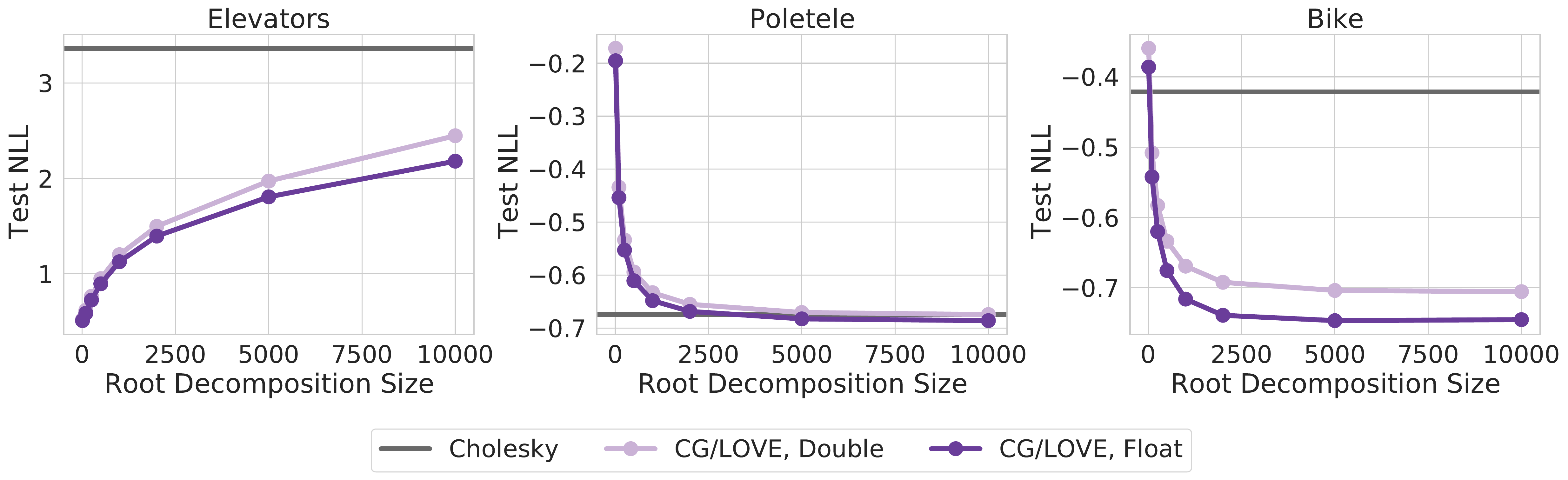}
    \caption{Test NLL (standardized) on \texttt{elevators}, \texttt{bike}, and \texttt{poletele} as a function of the root decomposition size for LOVE \citep{pleiss_constant-time_2018} using GPyTorch default settings for preconditioners and CG tolerance. Only on \texttt{poletele} does any size root decomposition reach the same NLL as the Cholesky implementation.}
    \label{fig:nll_defaults}
\end{figure*}

In \cref{fig:nll_defaults}, we display the standardized negative log likelihood on elevators, bike and poletele as a function of the root decomposition size for LOVE for the gpytorch default settings (CG tolerance of $1$ and a rank $15$ preconditioner).
On elevators and bike, there are significant differences between the NLLs, even in double precision and for very large root decompositions. 
Furthermore, elevators shows evidence of over-fitting as the NLL increases when the root decomposition grows while bike (and poletele) show evidence of some under-fitting as the NLL decreases when the root decomposition is larger.

By comparison, the actual accuracy, solely determined by the accuracy of the CG solve $\widehat{K}_{\mbf{X},\mbf{X}}^{-1} \mbf{y}$ is much closer for the default settings. 
We show these results in \cref{fig:rmse_defaults}, where intriguingly a less accurate solve can sometimes reduce the RMSE, perhaps due to increased regularization.

\begin{figure*}[!ht]
    \centering
    \includegraphics[width=.8\linewidth]{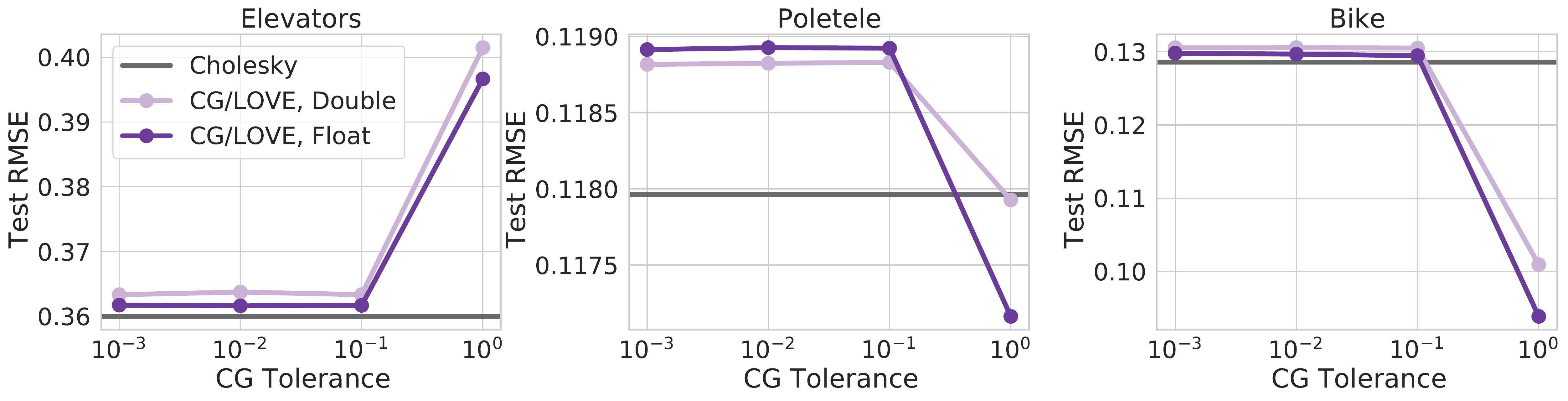}
    \caption{Test set root mean square error (RMSE, standardized) on elevators, bike, and poletele as a function of the CG tolerance for the default rank $15$ preconditioner.}
    \label{fig:rmse_defaults}
\end{figure*}

\subsection{Comparing Adam and L-BFGS-B}
\label{sec:adam_vs_lbfgs}

\begin{figure*}[!ht]
    \centering
    \includegraphics[width=.8\linewidth]{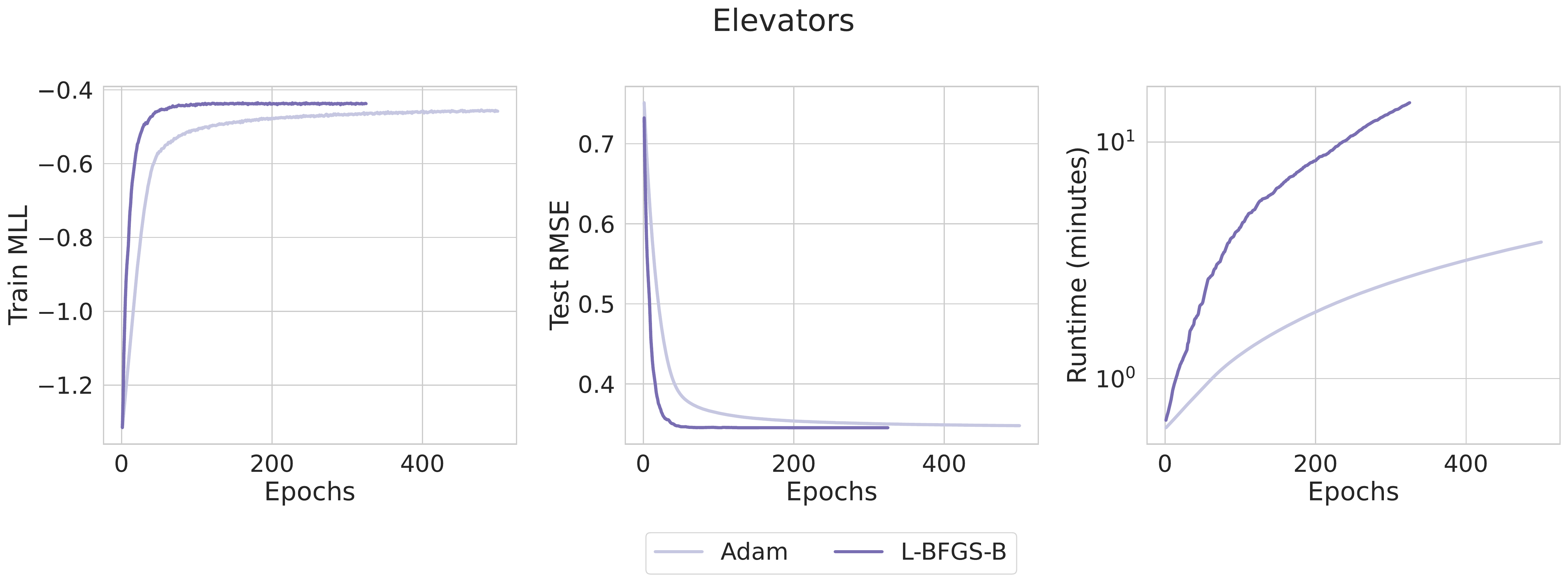}
    \caption{Comparing Adam and L-BFGS-B performance on \texttt{elevators} dataset.}
    \label{fig:elevators_perf}
\end{figure*}

\begin{figure*}[!ht]
    \centering
    \includegraphics[width=.8\linewidth]{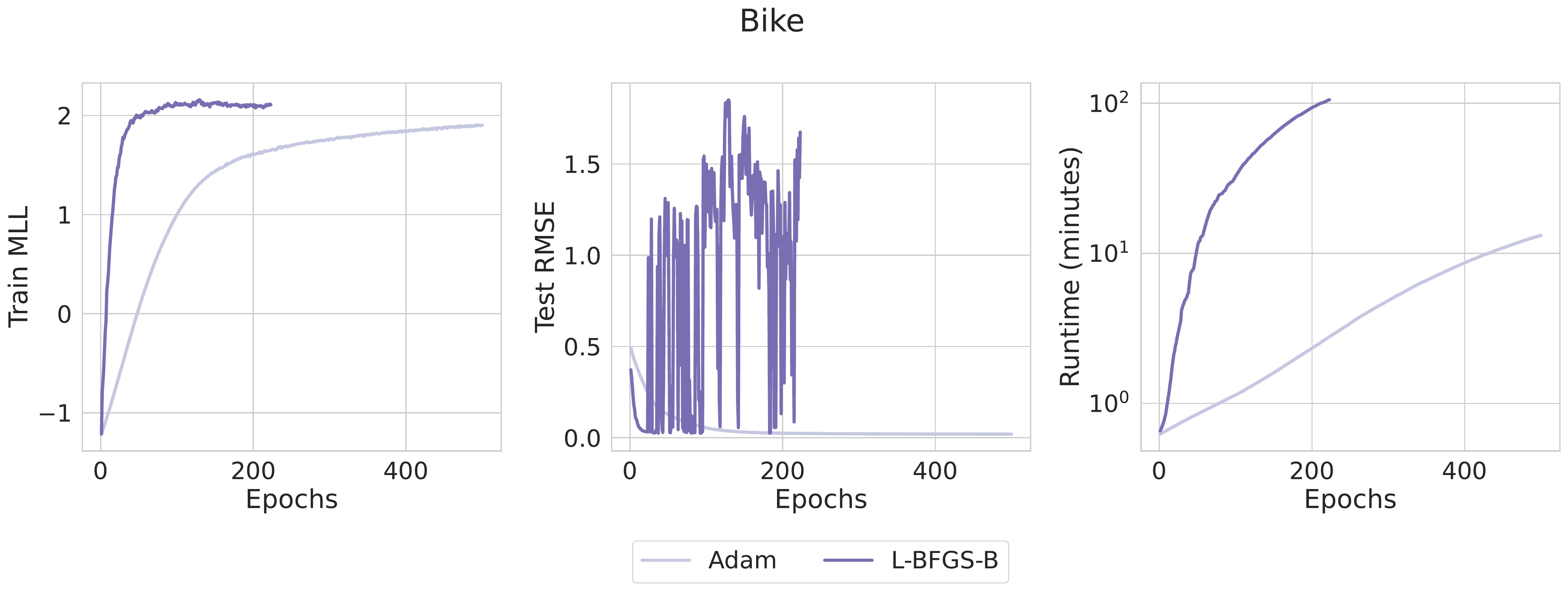}
    \caption{Comparing Adam and L-BFGS-B performance on \texttt{bike} dataset.}
    \label{fig:bike_perf}
\end{figure*}

\begin{figure*}[!ht]
    \centering
    \includegraphics[width=.8\linewidth]{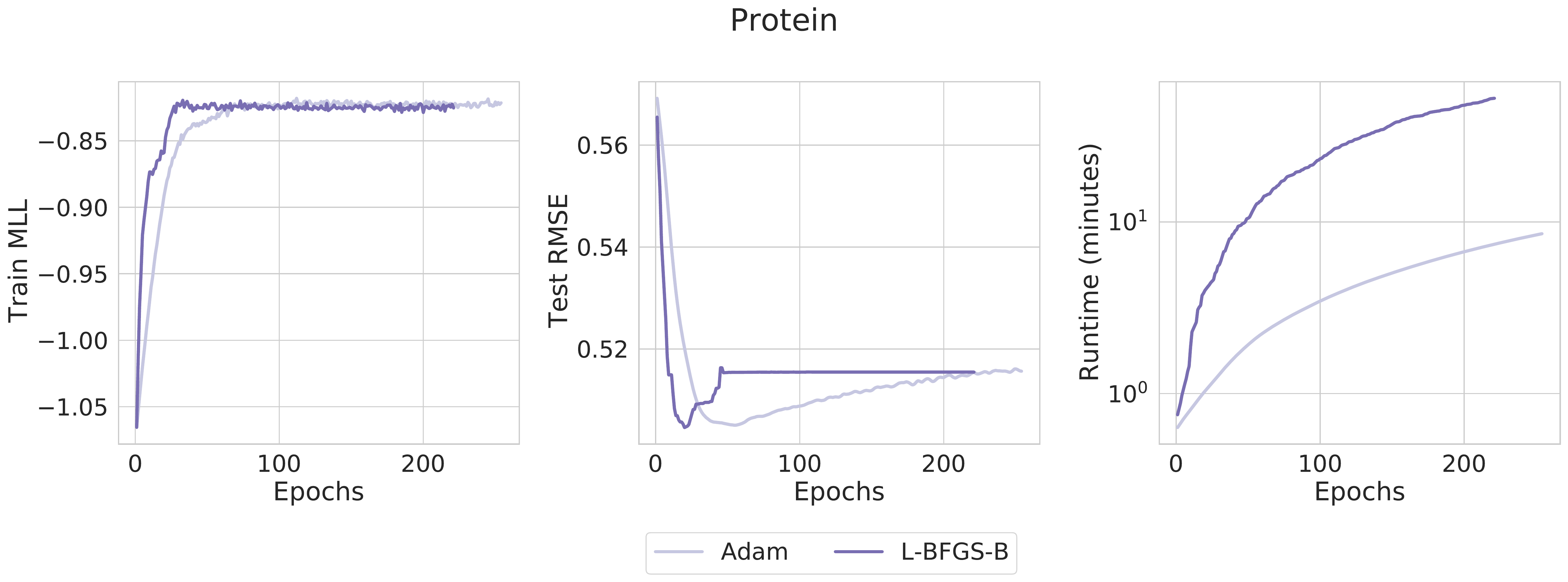}
    \caption{Comparing Adam and L-BFGS-B performance on \texttt{protein} dataset.}
    \label{fig:protein_perf}
\end{figure*}

In \cref{fig:elevators_perf,fig:bike_perf,fig:protein_perf}, we plot the curves for train MLL \cref{eq:gp_mll}, the test RMSE performance, and the runtime in minutes, against the number of epochs.

Overall, we find that while both optimizers achieve the same train MLL and test RMSE, L-BFGS-B converges much faster than Adam. Unfortunately, the gains achieved here are defeated by the L-BFGS-B being much slower. This is a practical challenge to be resolved in future work.
\end{document}